\newcommand{\X}{\mathbf{x}}
\newcommand{\Z}{\mathbf{z}}
\newcommand{\PHI}{\bm{\phi}}
\newcommand{\THETA}{\bm{\theta}}
\newcommand{\XI}{\bm{\xi}}
\newcommand{\norm}[1]{\left\lVert#1\right\rVert}
\newtheorem{theorem}{Theorem}
\newtheorem{lemma}[theorem]{Lemma}
\title{Interpreting Rate-Distortion of Variational Autoencoder and Using Model Uncertainty for Anomaly Detection}
\author{%
  Seonho Park $\:\:\:$ George Adosoglou $\:\:\:$ Panos M. Pardalos \\
  Department of Industrial and Systems Engineering\\ 
  University of Florida\\
  Gainesville, Florida, USA\\
  \texttt{\{seonhopark,g.adosoglou,pardalos\}@ufl.edu} \\
}
\begin{document}

\maketitle

\begin{abstract}
Building a scalable machine learning system for unsupervised anomaly detection via representation learning is highly desirable.
One of the prevalent methods is using a reconstruction error from variational autoencoder (VAE) via maximizing the evidence lower bound. 
We revisit VAE from the perspective of information theory to provide some theoretical foundations on using the reconstruction error, and finally arrive at a simpler and more effective model for anomaly detection. 
In addition, to enhance the effectiveness of detecting anomalies, we incorporate a practical model uncertainty measure into the metric.
We show empirically the competitive performance of our approach on benchmark datasets.

\end{abstract}

\section{Introduction}\label{sec:introduction}
Autoencoders have been widely used in many machine learning applications not only to reduce the noise from the input to learn representations but also to reconstruct the output with the salient information of the input.
These autoencoders learn common information of the inputs by mapping to the latent representations in an unsupervised manner.
When it comes to anomaly detection, using the reconstruction error of various autoencoders to discern anomalies has been widely and successfully employed \cite{hawkins2002outlier,sakurada2014anomaly,an2015variational,marchi2015novel,zhou2017anomaly}, even though using reconstruction error lacks its theoretical foundations. 
One of the autoencoders whose theoretical basis comes from variational inference is variational autoencoders (VAEs) \cite{kingma2013auto}.
VAEs try to minimize the difference between the true posterior and the variational posterior via maximizing the evidence lower bound (ELBO) with respect to the neural networks based encoder and decoder.
After training, we expect that the ELBO approximates the marginal likelihood of the data.

\begin{figure}[t]
\vskip 0.2in
\begin{center}
\centerline{\includegraphics[width=4.5in]{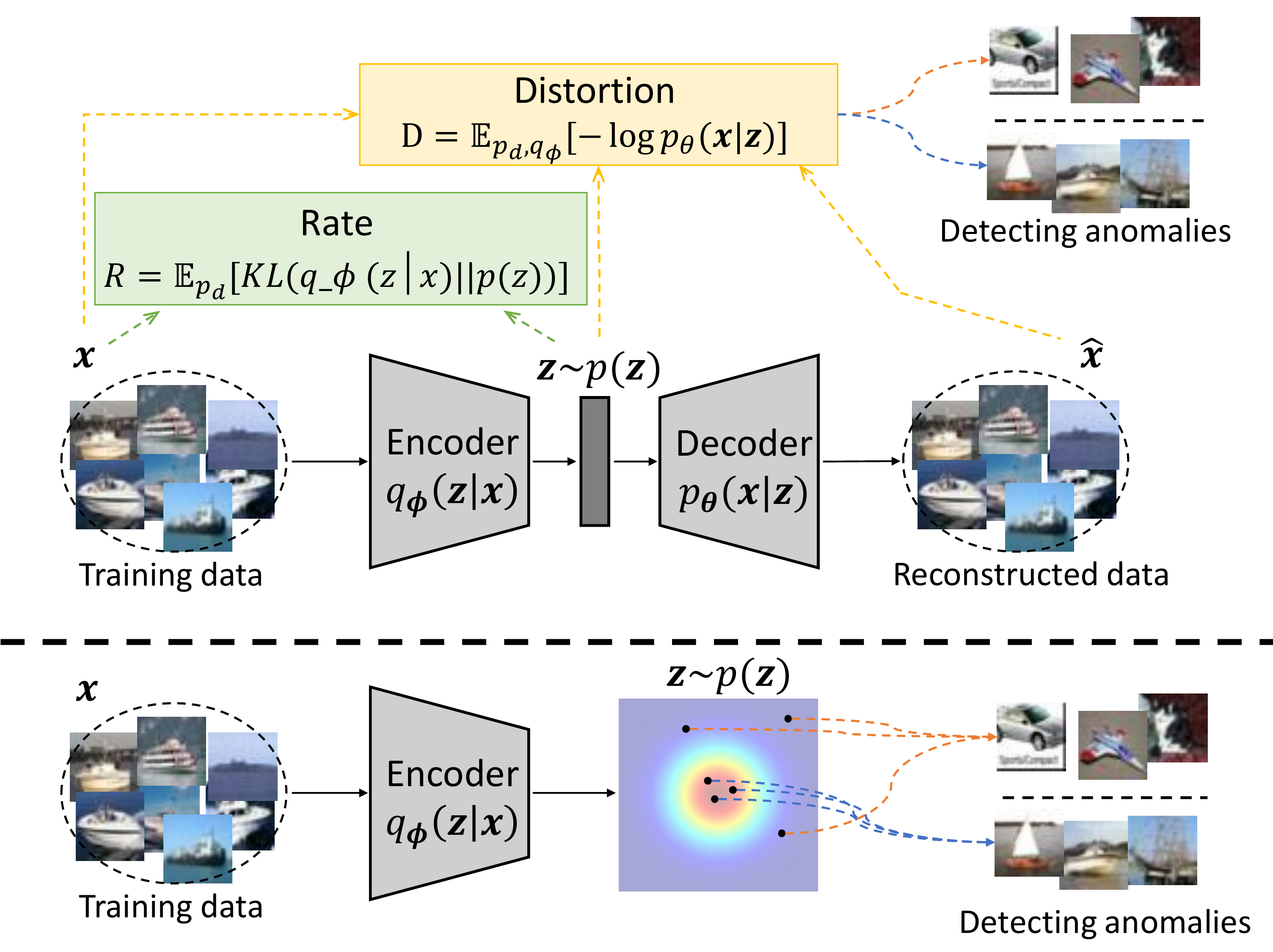}}
\caption{Overview of the proposed approach for anomaly detection. 
\textit{Top}: it shows the VAE based anomaly detection approach from the perspective of rate-distortion theory. The reconstruction error (distortion measure) is used as a metric of anomaly detection.
\textit{Bottom}: we only use the encoder (without the decoder) of VAEs to identify anomalies.
}
\label{fig:fig1}
\end{center}
\vskip -0.3in
\end{figure}

In this work, we revisit VAEs from the perspective of rate-distortion theory \cite{berger2003rate,alemi2017fixing} to elucidate the roles of the two terms: the rate and distortion.
Also, we argue that various autoencoders including $\beta$-VAE \cite{higgins2017beta} can be explained with the trade-off between the rate and distortion in this perspective.
Then, for the purpose of anomaly detection, we show that using only the encoder is more efficient to approximate the marginal likelihood, and finally, we arrive at a much simpler and more efficient model to discern anomalies.

Moreover, in order to enhance the performance of detecting anomalies, we incorporate the model uncertainty into our anomaly detection score. 
Since anomalies are unseen when the model is trained in an unsupervised setting, model uncertainty can capture the anomalies for which the model's confidence is low.
Finally, we justify our approaches with theoretical analyses as well as experiments on benchmark datasets.\footnote{One can reach out to the public implementation for whole experiments via \url{https://github.com/seonho-park/PGN_anomaly_detection}}

To summarize, we make the following contributions for anomaly detection.
\begin{itemize}
\setlength\itemsep{0.1em}
    \item With the theoretical foundation on variational inference and rate-distortion theory, we elucidate that the VAE-based anomaly detection system aim to approximate the marginal probability of the data.
    \item We propose that using the encoder only is more efficient and simpler than VAE's reconstruction error to measure anomaly score. 
    \item We incorporate model uncertainty into the metric to enhance anomaly detection performance. 
    \item We provide theoretical and empirical basis on our approach for anomaly detection.
\end{itemize}

\section{Problem Definition} \label{sec:problem_definition}
What we aim to do in this work is to derive an anomaly score $s(\mathbf{x})$ to indicate whether a given query datapoint $\mathbf{x}$ is anomalous or not.
More formally, with a scalar threshold $\gamma$, anomaly score $s(\mathbf{x})$ should distinguish anomalous instances as,
\begin{align*}\label{eq:anomaly_score}
    &s(\mathbf{x}) \geq \gamma \Rightarrow \text{anomalous}\\
    &s(\mathbf{x}) < \gamma \Rightarrow \text{normal}
\end{align*}
We cannot anticipate which anomalies come to the system in the future so it is reasonable to assume that an anomaly detection model is learned in an unsupervised way, that is, when training, we only have access to normal data and when testing we can access \textit{contaminated} data consisting of both normal and anomalous instances. 
This setting is also referred to as one class classification \cite{scholkopf2001estimating,ruff2018deep}.

\section{Information Theoretical Interpretation of VAE}\label{sec:VAE}
In this section, we revisit VAEs \cite{kingma2013auto} in the context of information theory to clarify the terms of the loss function of VAEs.

\paragraph{Variational Autoencoder}
Let us assume that we have a dataset $\mathbf{X}=\{\X^{(i)}\}_{i=1}^N$ consisting of normal datapoints $\X\in \mathcal{X}$, i.i.d. sampled.
The datapoints in $\mathbf{X}$ are realized by a random process, $p^*(\X|\Z)p^*(\Z)$, where $p^*(\Z)$ and $p^*(\X|\Z)$ are a true prior over latent variables and a true likelihood, respectively.
Also, we assume that the latent random variable $\Z\in\mathcal{Z}\subseteq\mathbb{R}^J$ follows a true prior $p^*(\Z)$.

Given an input $\X$, a variational posterior (which is also referred to as an encoder) is derived to approximate a true posterior via KL divergence and the corresponding marginal log-likelihood can be expressed with variational inference (VI) as
\begin{equation}\label{eq:vae_1}
    \log p(\X) = KL(q_{\PHI}(\Z|\X)||p(\Z|\X))+\mathcal{L}_{VI}(\PHI, \THETA;\X)
\end{equation}
where $q_{\PHI}(\Z|\X)$ is a neural network model parameterized by parameters $\PHI$.
The first RHS term of the above equation (Eq. \ref{eq:vae_1}) is the KL divergence between the variational and true posterior. 
The KL divergence is always nonnegative and it is zero if and only if the variational posterior is exactly equivalent to the true posterior which is intractable to compute directly.
Because the KL divergence is nonnegative, we could say the second RHS term is the lower bound of the marginal log-likelihood, $\log p(\X)$ which is fixed.
This second RHS term can be elaborated as
\begin{equation}\label{eq:elbo_1}
\begin{aligned}
    \mathcal{L}_{VI}(\PHI, \THETA;\X) = -KL(q_{\PHI}(\Z|\X)||p(\Z))+\\
    \mathbb{E}_{q_{\PHI}(\Z|\X)}\left[\log p_{\THETA}(\X|\Z)\right]
\end{aligned}
\end{equation}
where $p_{\THETA}(\X|\Z)$ is a variational approximation (decoder) to a true likelihood, parameterized by parameters $\THETA$ and $p(z)$ is an approximation to $p^*(z)$.
The first RHS term acts as a regularizer of $q_{\PHI}$ and the second RHS term corresponds to the negative reconstruction error.
By taking an expectation w.r.t. the empirical data distribution $p_d(\X)$, VAEs seek to maximize the evidence lower bound (ELBO) to minimize the KL divergence between the variational posterior and true posterior as
\begin{equation}\label{eq:elbo_2}
\begin{aligned}
    \max_{\PHI, \THETA} \mathbb{E}_{p_d(\X)}\left[ -KL(q_{\PHI}(\Z|\X)||p(\Z))\right]\\
    + \mathbb{E}_{p_d(\X)}\left[\mathbb{E}_{q_{\PHI}(\Z|\X)}\left[\log p_{\THETA}(\X|\Z)\right]\right]
\end{aligned}
\end{equation}

The ELBO consists of two terms.
The first term in Eq.\ref{eq:elbo_2} can be interpreted as compression loss of the input information.
If the first term is high (as $q_{\PHI}(\Z|\X)$ approaches $p(\Z)$, it means that the latent code compresses the input so well that the salient information of the input disappears.\footnote{Sometimes, compression is also referred to as disentanglement because what we aim to get as a latent representation is usually a disentangled representation and manipulate some elements of the latent vector to tweak the reconstruction readily. Please see \cite{burgess2018understanding,kim2018disentangling,higgins2017beta} for more details.}
The second term is the expected negative reconstruction error, which represents the (negative) difference between the input and the output from the decoder.
Thus, the ELBO can be interpreted as the trade-off between the compression loss (how much information can be lost in the latent space) and the (negative) reconstruction error (how much information can be retrieved from the decoder). 

\paragraph{VAE as Lossy Compression}
From the perspective of the rate-distortion theory \cite{berger2003rate}, we revisit VAE to elucidate the roles of the terms of the ELBO.
We derive two terms the \textit{rate} and \textit{distortion}, which correspond to negative compression loss and the reconstruction error, respectively.

Based on the previous work \cite{alemi2017fixing}, we can rewrite the VAE problem as:
\begin{equation}\label{eq:rate_distortion_loss}
    \min_{\PHI, \THETA} R(\X,\Z | \PHI) + D\left(\X,\Z | \PHI, \THETA \right)
\end{equation}
where
\begin{align}
    R\left(\X,\Z | \PHI \right)&=\mathbb{E}_{p_d(\X)}\left[ KL(q_{\PHI}(\Z|\X)||p(\Z)) \right]\label{eq:def_rate}\\
    D\left(\X,\Z | \PHI, \THETA \right)&=\mathbb{E}_{p_d(\X)}\left[ \mathbb{E}_{q_{\PHI}(\Z|\X)}\left[-\log p_{\THETA}(\X|\Z) \right]  \right]\label{eq:def_distortion}
\end{align} 
The rate, $R$, is the expected value of the rate measure, $KL(q_{\PHI}(\Z|\X)||p(\Z))$.
The rate represents the expectation of the KL divergence between the encoder and prior.

$D$ is the distortion, the expected value of distortion measure, $d(\X,\Z)=\mathbb{E}_{q_{\PHI}}[-\log p_{\THETA}]$ representing the reconstruction error.
Note that the rate only depends on the parameters $\PHI$ of the encoder, while the distortion depends on both $\PHI$ and $\THETA$.

Also, we would like to introduce the data entropy, $H$, as
\begin{equation}\label{eq:data_entropy}
    H(\X) = \mathbb{E}_{p_d(\X)}\left[ -\log p(\X) \right]
\end{equation}
Given $R$, $D$, and $H$, the expectation w.r.t. $p_d(\X)$ of Eq.\ref{eq:vae_1} can be rewritten as,
\begin{equation}\label{eq:entropy_rate_distortion}
    H = \mathbb{E}_{p_d(\X)}\left[-KL(q_{\PHI}(\Z|\X)||p(\Z|\X))\right] + R + D
\end{equation}
From the nonnegative property of the KL divergence, we can say $H\leq R+D$ where the equality holds if and only if the variational posterior equals to the true posterior, i.e., $q_{\PHI}(\Z|\X^{(i)})=p(\Z|\X^{(i)}),\:\forall \X^{(i)}\in\mathbf{X}$.
Then, $H=R+D$.
This represents a theoretical lower bound of $R+D$ and is depicted as the red solid line in Fig. \ref{fig:fig2}.
In VAE, we may not achieve this ideal case, $H=R+D$, because of the limited finite families of parameters, the approximated prior and noises in the given dataset.
Instead, we seek to find the (information) distortion-rate function (curve) by solving the following optimization problem:
\begin{equation}\label{eq:distortion_rate_function}
\begin{aligned}
    \min_{\PHI,\THETA} D\\
    \text{subject to } R\leq \bar{R}
\end{aligned}
\end{equation}
where $\bar{R}$ denotes an upper limit of the rate.
In order to optimize both without taking $\bar{R}$, we can take the Lagrangian of Eq. \ref{eq:distortion_rate_function} with a Lagrange multiplier $\beta>0$ as\footnote{One could think this formulation is to minimize two objectives $R$ and $D$. Then, we can achieve the Pareto frontier of the objectives, which is corresponding to the distortion-rate function in rate-distortion theory as well.},
\begin{equation}\label{eq:beta_vae}
    \min_{\PHI, \THETA} D+\beta R
\end{equation}
which resembles the $\beta$-VAE objective \cite{higgins2017beta}.

\begin{figure}[t]
\vskip 0.2in
\begin{center}
\centerline{\includegraphics[width=3.0in]{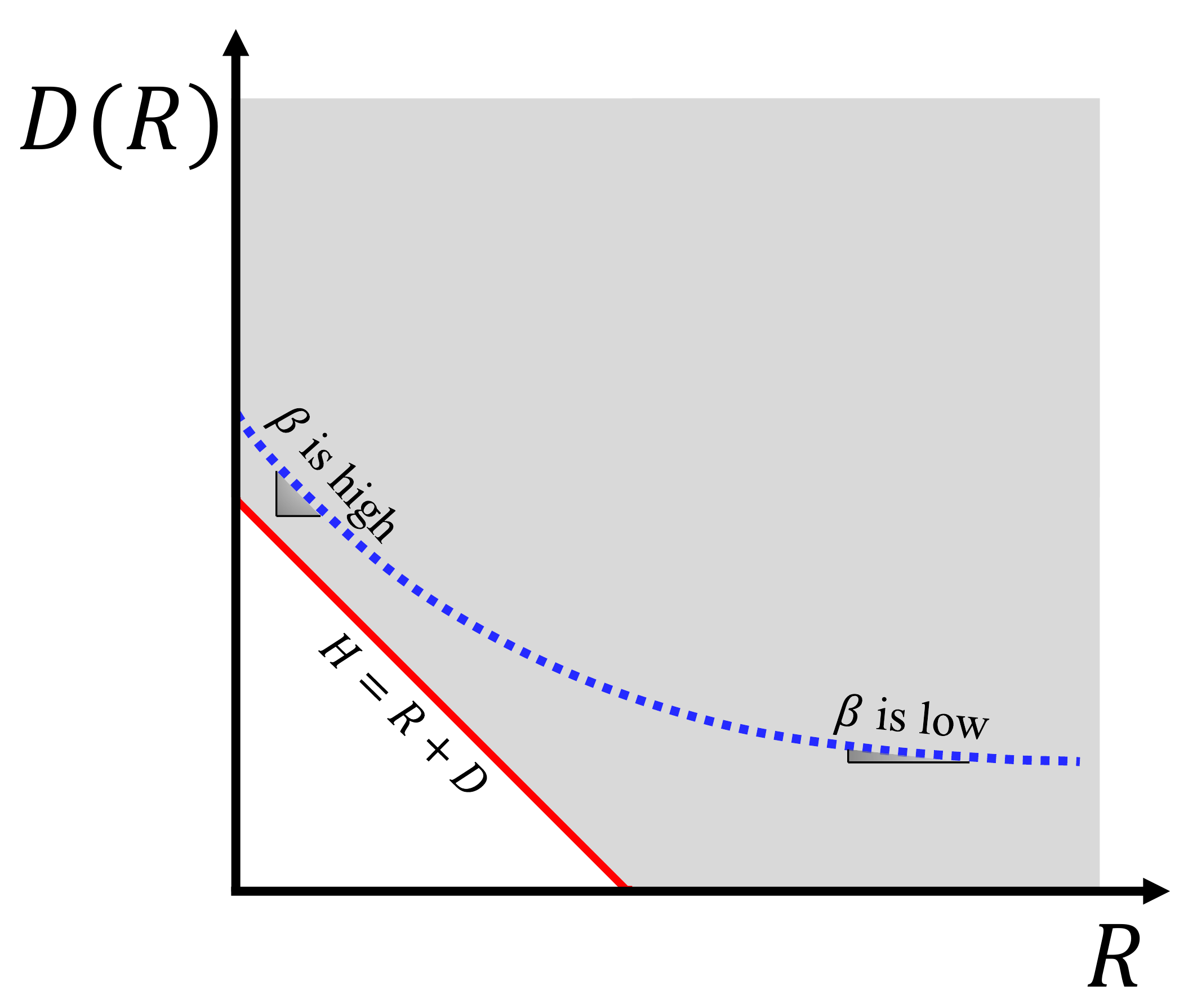}}
\caption{Schematic view of distortion-rate function. A red line corresponds to the theoretical lower bound of the rate and distortion. By varying $\beta$ of $\beta$-VAE, we could achieve the points on a blue dashed curve, the sub-optimal distortion-rate function, which is best achievable with VAEs.}
\label{fig:fig2}
\end{center}
\vskip -0.2in
\end{figure}
\setlength{\belowcaptionskip}{-20pt}

Varying $\beta$ in $\beta$-VAE, we can get the distortion-rate function depicted as a blue dashed curve in Fig. \ref{fig:fig2}.
Even though the curve is not explicitly formed except for some known simple examples, it is known that the distortion-rate function is convex and monotonically non-increasing.
With these properties, the Lagrangian multiplier can be interpreted as a negative slope of the distortion-rate function.
This $\beta$ plays a role to balance the rate and the distortion.
When $\beta$ is high, we can get the point where the rate is low and the distortion is high.
Whereas, when $\beta$ is low, we can get the point where the rate is high and the distortion is low.
Because the joint distribution $p(\X,\hat{\X})$ of lossy compression is composed of the encoder and decoder, it is noted that distortion cannot be zero even when the rate is high enough \cite{tishby2015deep}.

\section{Method}\label{sec:method}
\paragraph{Zero Rate Setting to Approximate Data Distribution}
In this section, we first introduce a method to approximate the marginal log-likelihood of the data and derive the anomaly score considering model uncertainty as well as the approximate data distribution to detect anomalies.

The main purpose of using VAEs is to encode the salient information of the data to the latent space and reconstruct the output $\hat{\X}$. 
When it comes to anomaly detection, even though using the reconstruction error of autoencoders shows great performances empirically, it is not so straightforward and lacks some theoretical foundations.
From Eq. \ref{eq:distortion_rate_function}, if we set $\bar{R}$ to be zero then the learned rate has to be zero.
$R=0$ means from the definition of the rate that $q_{\PHI}(\Z|\X)=p(\Z)$ for all datapoints in training dataset.
The latent variable $\Z$ does not store any particular information of the individual datapoint and the decoder seeks to give outputs via the stochastic decoder and the reparameterization trick \cite{kingma2013auto} which resemble the empirical data distribution, $p_d(\X)$.
This means that the decoded outputs of different inputs are widely distributed as the training datapoints are.
Precisely, the zero rate corresponding distortion can be rewritten with the definition of the distortion (Eq. \ref{eq:def_distortion}) as
\begin{equation}\label{eq:distortion_zero_rate}
    D = -\int d\X p_d(\X) \int d\Z p(\Z) \log p_{\THETA}(\X|\Z)
\end{equation}
Thus, the gap between $H$ and $D$ when $R=0$ can be decreased when we use sufficiently large $p_d(\X)$ and a powerful decoder $p_{\THETA}$.
Equivalently, the difference of intercepts of the blue dashed curve and the red line in Fig. \ref{fig:fig2} can be interpreted by means of insufficient data representations of dataset, poor decoder performance, and implicitly wrong selection of prior and its dimension.

From this observation, we could conclude that, for the purpose of anomaly detection, we can achieve the distortion-rate function by setting  $\beta=\infty$ as well.
Then, we only need the encoder of the autoencoder to estimate the marginal log-likelihood of the data.
As a consequence, we solve the following problem to estimate the marginal log-likelihood of the data:
\begin{equation}\label{eq:gpn}
    \min_{\PHI} \mathcal{L}(\PHI) = \mathbb{E}_{p_d(\X)}\left[ KL(q_{\PHI}(\Z|\X)||p(\Z)) \right]
\end{equation}
With this formulation, since the decoder $p_{\THETA}$ vanishes, we could have a more concise model (only with the encoder) rather than using the reconstruction error which requires both the encoder and the decoder.
The model can be learned with first or second order stochastic optimization methods such as stochastic gradient descent, ADAM \cite{kingma2014adam}, AdaGrad \cite{duchi2011adaptive}, Newton or its variant methods \cite{bottou2018optimization,park2020combining} via direct backpropagation.

\paragraph{Prior Generating Networks}
We can break down the rate as reported in \cite{hoffman2016elbo} as
\begin{equation}
    \mathbb{E}_{p_d}\left[KL(q_{\PHI}(\Z|\X)||p(\Z))\right] = I(\X;\Z) + KL(q_{\PHI}(\Z)||p(\Z))
\end{equation}
where $I$ denotes the mutual information between $\X$ and $\Z$ under the joint distribution $q_{\PHI}(\X,\Z) = q_{\PHI}(\Z|\X)p_d(\X)$.
Also, $q_{\PHI}(\Z)$ is known as the aggregated variational posterior \cite{tomczak2017vae,kim2018disentangling} and can be attained as $q_{\PHI}(\Z)=\mathbb{E}_{p_d}\left[q_{\PHI}(\Z|\X)\right]$.
Also, when we set $p(\Z)=q_{\PHI}(\Z)$, the rate is identical to the mutual information $I(\X;\Z)$. 
This setting of the prior is referred to as the VampPrior \cite{tomczak2017vae}.
The mutual information $I(\X;\Z)$ is upper bounded by the rate $R$ as above.
Therefore, the zero rate setting means that the latent codes do not have any information about normal datapoints in the training dataset.
Intuitively, the rate measure $KL(q_{\PHI}(\Z|\X)||p(\Z))$ of anomalous instance is higher than that of normal instances. 

In our experiments, the prior $p(\Z)$ is defined by an isotropic multivariate Gaussian, $\mathcal{N}(\mathbf{0},\mathbf{1})$ as in \cite{kingma2013auto}.
Also let the encoder be the neural network based model of which the outputs are the mean and standard deviation of the isotropic multivariate Gaussian, i.e., $q_{\PHI}(\Z|\X^{(i)})=(\bm{\mu}^{(i)}, \bm{\sigma}^{(i)})$.
Thus $\mathcal{L}(\PHI)$ in Eq.\ref{eq:gpn} can be simplified as
\begin{equation}
\begin{aligned}
    \mathbb{E}_{p_d(\X)}\left[ KL(q_{\PHI}(\Z|\X)||p(\Z)) \right]\\
    = \frac{1}{N}\sum_{i=1}^{N}\sum_{j=1}^{J}\left(\log\frac{1}{\sigma_j^{(i)}} + \frac{(\sigma_j^{(i)})^2+\mu_j^{(i)})^2}{2}-\frac{1}{2} \right)
\end{aligned}
\end{equation}
In our experiments, we have used this setting and we call this \textit{prior generating networks (PGN)} in what follows. This is named after the fact that the neural network based encoder merely aims to approximate the prior $p(z)$.

\paragraph{Anomaly Score with Model Uncertainty}
Disregarding the expectation with respect to $p_d$, from Eq. \ref{eq:entropy_rate_distortion} we can estimate the log probability of a query input $\X^*$. 
Note that as the encoder converges to the prior we cannot reconstruct the input, the distortion measure should be constant. 
Hopefully, if we assume that the KL divergence between the variational posterior and true posterior is sufficiently low, so negligible, then the log probability of $\X^*$ is proportional to the negative KL divergence between the variational posterior and prior, i.e., $\log p(\X^*) \propto - KL(q_{\PHI}(\Z|\X^*)||p(\Z))$.

Our model $q_{\PHI}(\Z|\X)$ can be deterministic and the data distribution, $p_d(\X)$, can only impose some stochasticity into the model.
The model can gain more stochasticity by employing a random noise $\XI$ into the model as $q_{\PHI}(\Z|\X,\XI)$.
One practical way to do this is to use MC dropout \cite{gal2015dropout}.
Inserting dropout layers \cite{srivastava2014dropout} in the model, MC dropout estimates the first and second moments by Monte Carlo samplings with $T$ stochastic forward passes.

MC dropout is one of the most prevalent methods used for capturing model uncertainty.
Model uncertainty, also referred to as \textit{epistemic} uncertainty, comes from the lack of knowledge of the data.
It includes uncertainties generated by the situation where the model does not have enough knowledge and/or experience on the data\footnote{The protocol of OOD detection is similar to that of anomaly detection, where we have access to normal data when training and distinguish the anomalies (or data came from other datasets). However, OOD detection is usually conducted as a byproduct to assess the ``confidence`` of the system for classifications or regressions while the system for anomaly detection is merely for it. Please see \cite{hendrycks2016baseline} for more details on OOD detection.}.
From the problem definition, we also assume that the model is trained only on the normal datapoints so that the model uncertainty can capture anomalies by generating higher uncertainties on them.

Let us define $\XI^{(t)}$ as the \textit{t}-th realization of the random noise $\XI$.
Also, its elements are i.i.d and sampled from the Bernoulli distribution with the dropout probability $p$.
With the $T$ stochastic forward passes, $KL(q_{\PHI^*}(\Z|\X)||p(\Z))=KL(\mathbb{E}_{\XI}\left[q_{\PHI^*}(\Z|\X,\XI)\right]||p(\Z))$ and $\mathbb{E}_{\XI}\left[q_{\PHI^*}(\Z|\X,\XI)\right] \simeq \frac{1}{T}\sum_{t=1}^{T}q_{\PHI^*}(\Z|\X,\XI^{(t)})$.
For simplicity, we only impose $\XI$ into the mean output $\bm{\mu}$ of $q_{\PHI}$.
As a result, we propose and use the following anomaly score metric.
Given a query point $\X^*$ and learned parameters $\PHI^*$,
\begin{align}
    s(\X^*) = \frac{1}{T}\sum_{t=1}^T KL\left(q_{\PHI^*}(\Z|\X^*, \XI^{(t)})||p(\Z)\right)=\label{eq:anomaly_score}\\
    KL\left(\frac{1}{T}\sum_{t=1}^{T}q_{\PHI^*}(\Z|\X,\XI^{(t)})||p(\Z)\right) + 
    \text{Variation}[\bm{\mu}(\X^*)]\label{eq:anomaly_score2}
\end{align}
where $\bm{\mu}(\X^*)$ is an abbreviation of the mean output of $q_{\PHI}(\Z|\X^*)$ and Variation$[\cdot]$ denotes the model uncertainty measure.
It is noted that Eq. \ref{eq:anomaly_score} is equivalent to the summation of MC dropout based estimation and the measured model uncertainty. 
So Eq. \ref{eq:anomaly_score} as an anomaly score is efficient to capture both the mean KL value with additionally imposed stochasticity and model uncertainty via MC dropout.

\paragraph{Theoretical Analysis of Model Uncertainty}
We elucidate the reason that Eq. \ref{eq:anomaly_score} is equal to Eq. \ref{eq:anomaly_score2}.
To this end, we first revisit the following lemma regarding the gap of Jensen's inequality.

\begin{lemma}[The Gap of Jensen's inequality]\label{lemma:gap_jensen}
Let $x$ be a one dimensional random variable and $p(x\in(a,b))=1$ where $-\infty \leq a<b\leq \infty$.
Let $\varphi(x)$ be a twice differentible function on $(a,b)$. 
Then,
\begin{equation}\label{eq:gap_jensen}
\begin{aligned}
    \frac{\inf_x\varphi''(x)\mathrm{Var}(x)}{2}&\leq \mathbb{E}[\varphi(x)]-\varphi(\mathbb{E}[x]) \\
    &\leq \frac{\sup_x\varphi''(x)\mathrm{Var}(x)}{2}
\end{aligned}
\end{equation}
\end{lemma}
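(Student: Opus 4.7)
The plan is to prove this by a second-order Taylor expansion of $\varphi$ about the mean $\mu = \mathbb{E}[x]$, which is the classical route to quantifying the Jensen gap. Since $\varphi$ is twice differentiable on $(a,b)$ and $x$ lies almost surely in $(a,b)$, Taylor's theorem with a Lagrange-form remainder gives, for each realization of $x$, a point $\xi$ (depending on $x$, lying between $x$ and $\mu$) such that
\begin{equation*}
\varphi(x) = \varphi(\mu) + \varphi'(\mu)(x-\mu) + \tfrac{1}{2}\varphi''(\xi)(x-\mu)^2.
\end{equation*}

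First I would take expectations of both sides. The linear term drops out because $\mathbb{E}[x-\mu] = 0$, leaving
\begin{equation*}
\mathbb{E}[\varphi(x)] - \varphi(\mu) = \tfrac{1}{2}\,\mathbb{E}\!\left[\varphi''(\xi)(x-\mu)^2\right].
\end{equation*}
Next I would bound the integrand pointwise using $\inf_{y\in(a,b)}\varphi''(y) \leq \varphi''(\xi) \leq \sup_{y\in(a,b)}\varphi''(y)$, which is legitimate because $\xi \in (a,b)$ whenever $x \in (a,b)$, an event of probability one. Multiplying by the nonnegative quantity $(x-\mu)^2$ preserves the inequalities, and taking expectation over $x$ replaces $\mathbb{E}[(x-\mu)^2]$ by $\mathrm{Var}(x)$, yielding exactly the claimed two-sided bound.

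The only real subtlety is measurability and integrability: $\xi$ is only defined up to a choice of mean-value point, so one should either appeal to the integral form of the remainder, $\varphi(x) - \varphi(\mu) - \varphi'(\mu)(x-\mu) = \int_0^1 (1-t)\varphi''(\mu + t(x-\mu))(x-\mu)^2\,dt$, which avoids any ambiguous $\xi$, or note that the pointwise bounds $\tfrac{1}{2}\inf\varphi''\cdot (x-\mu)^2 \leq \varphi(x)-\varphi(\mu)-\varphi'(\mu)(x-\mu) \leq \tfrac{1}{2}\sup\varphi''\cdot (x-\mu)^2$ hold directly from the mean-value theorem applied to the remainder. I would prefer the integral-remainder route since it is manifestly well defined and the infimum/supremum bounds on $\varphi''$ slide through the integral cleanly. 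If $\inf\varphi'' = -\infty$ or $\sup\varphi'' = +\infty$ the lower/upper bound is trivially vacuous, so no separate degenerate case handling is needed.

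The hardest step, if any, is just justifying the swap from the pointwise Lagrange $\xi$ to a valid expectation statement; once the integral-form remainder is invoked this reduces to monotonicity of the Lebesgue integral and is routine. The rest is bookkeeping.
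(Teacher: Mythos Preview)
Your argument is correct: the second-order Taylor expansion about $\mu=\mathbb{E}[x]$, with the remainder bounded via $\inf\varphi''$ and $\sup\varphi''$, is exactly the standard route, and your preference for the integral form of the remainder cleanly sidesteps the measurability issue with a Lagrange point $\xi$ that depends on $x$.

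As for comparison with the paper: there is essentially nothing to compare against. The paper does not supply its own proof of this lemma; it simply cites Theorem~1 of Becker~(2012) and Liao et al.~(2019). The proof in those references is precisely the Taylor-expansion argument you outline (expand about the mean, kill the linear term, sandwich the second-derivative factor), so your proposal is not only correct but also aligned with the cited sources the paper defers to. You have in effect filled in what the paper omits.
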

\begin{proof}
Please refer to the proof of the theorem 1 in \cite{becker2012variance} and \cite{liao2019sharpening}.
\end{proof}

This lemma implies that when the function $\varphi$ is strictly convex the Jensen's inequality gap represents the variance of the random variable.
With this, we can derive the following theorem to justify Eq. \ref{eq:anomaly_score}, the proposed anomaly score.

\begin{theorem}[PGN anomaly score measure]\label{thm:anomaly_score}
From a finite parameters set $\PHI$, let us assume that we have learned parameters $\PHI^*$.
Let $\bm{\mu}$ and $\bm{\sigma}$ be outputs of the encoder $q_{\PHI}$ and $p(\Z)$ be an isotropic multivariate Gaussian prior.
Also, assume that $\bm{\mu}$ involves a random noise $\XI$.
Then, given an arbitrary input data $\X^*$, the following equality holds
\begin{equation}\label{eq:theorem}
\begin{aligned}
    \mathbb{E}_{\XI}\left[KL(q_{\PHI^*}(\Z|\X^*,\XI)||p(\Z))\right] = \\
    KL(\mathbb{E}_{\XI}\left[q_{\PHI^*}(\Z|\X^*,\XI)\right]||p(\Z))+ \alpha\sum_{j=1}^J\mathrm{Var}_{\XI}(\mu_j(\X^*|\XI))
\end{aligned}
\end{equation}
\end{theorem}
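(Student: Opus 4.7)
The plan is to reduce the theorem to a direct application of the sharp Jensen-gap lemma (Lemma \ref{lemma:gap_jensen}) after noting that, in this Gaussian setting, the KL is a quadratic function of the mean vector.

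First, I would write out the closed-form KL divergence between the encoder $q_{\PHI^*}(\Z|\X^*,\XI)$, taken as a diagonal Gaussian with mean $\bm{\mu}(\X^*|\XI)$ and standard deviation $\bm{\sigma}(\X^*)$, and the isotropic prior $p(\Z)=\mathcal{N}(\mathbf{0},\mathbf{I})$. Because both covariances are diagonal, the KL factorises across coordinates:
\begin{equation*}
KL(q_{\PHI^*}(\Z|\X^*,\XI)||p(\Z)) = \sum_{j=1}^J \varphi_j\bigl(\mu_j(\X^*|\XI)\bigr), \quad \varphi_j(\mu) := -\log \sigma_j + \tfrac{1}{2}\sigma_j^2 + \tfrac{1}{2}\mu^2 - \tfrac{1}{2}.
\end{equation*}
The crucial observation, already invoked in the setup of the theorem, is that $\XI$ is injected only into $\bm{\mu}$ and not into $\bm{\sigma}$, so all of the $\XI$-dependence is carried by the quadratic term $\tfrac{1}{2}\mu_j^2$. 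Consequently $\varphi_j''(\mu)=1$ identically for every $j$.

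Second, I would apply Lemma \ref{lemma:gap_jensen} coordinate by coordinate to the random variable $\mu_j(\X^*|\XI)$ with $\varphi=\varphi_j$. Since $\inf_\mu \varphi_j''=\sup_\mu \varphi_j''=1$, the two bounds in Eq.~\ref{eq:gap_jensen} coincide, so the Jensen inequality becomes a sharp equality:
\begin{equation*}
\mathbb{E}_{\XI}\bigl[\varphi_j(\mu_j(\X^*|\XI))\bigr] - \varphi_j\bigl(\mathbb{E}_{\XI}[\mu_j(\X^*|\XI)]\bigr) = \tfrac{1}{2}\mathrm{Var}_{\XI}(\mu_j(\X^*|\XI)).
\end{equation*}
Summing over $j=1,\dots,J$ and using linearity of expectation on the left-hand side gives
\begin{equation*}
\mathbb{E}_{\XI}\bigl[KL(q_{\PHI^*}(\Z|\X^*,\XI)||p(\Z))\bigr] = \sum_{j=1}^J \varphi_j\bigl(\mathbb{E}_{\XI}[\mu_j(\X^*|\XI)]\bigr) + \tfrac{1}{2}\sum_{j=1}^J \mathrm{Var}_{\XI}(\mu_j(\X^*|\XI)).
\end{equation*}

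Third, I would identify the first sum on the right as $KL(\mathbb{E}_{\XI}[q_{\PHI^*}(\Z|\X^*,\XI)]||p(\Z))$ in the sense used in the paper, and read off $\alpha=\tfrac{1}{2}$. The one point that needs a sentence of justification is that $\mathbb{E}_{\XI}[q_{\PHI^*}(\Z|\X^*,\XI)]$ is strictly speaking a mixture of Gaussians rather than a Gaussian, so $KL(\mathbb{E}_{\XI}[q_{\PHI^*}]||p)$ must be interpreted as the closed-form Gaussian KL evaluated at the aggregated mean $\mathbb{E}_{\XI}[\bm{\mu}]$ with standard deviation $\bm{\sigma}$; under that convention the identification is immediate from the definition of $\varphi_j$. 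This interpretive step, not any calculation, is the only obstacle: once it is in place, the theorem is a one-line consequence of the quadratic form of $\varphi_j$ and Lemma \ref{lemma:gap_jensen}.
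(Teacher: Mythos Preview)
Your approach is essentially the paper's: define $\varphi_j(\mu_j)=KL(\mathcal{N}(\mu_j,\sigma_j)\|\mathcal{N}(0,1))$, apply Lemma~\ref{lemma:gap_jensen} coordinate-wise, and sum over $j$. The difference is one of sharpness. The paper argues abstractly that $\varphi_j$ is strongly convex, sets $m_j=\inf\varphi_j''$ and $M_j=\sup\varphi_j''$, sums the two-sided bound, and then invokes an intermediate-value style step to assert the existence of some $\alpha$ in the resulting range. You instead compute $\varphi_j''\equiv 1$ directly from the closed-form Gaussian KL, so the two sides of Lemma~\ref{lemma:gap_jensen} collapse and you obtain the exact constant $\alpha=\tfrac{1}{2}$ without any intermediate-value argument. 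Your version is cleaner and pins down $\alpha$, whereas the paper's phrasing (with strict inequalities $0<m_j<M_j$ and the factor $2\alpha/J$) is looser than necessary for this specific setting.

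You are also right to flag the interpretive step: $\mathbb{E}_{\XI}[q_{\PHI^*}(\Z|\X^*,\XI)]$ is a Gaussian mixture, not a Gaussian, so ``$KL(\mathbb{E}_{\XI}[q_{\PHI^*}]\|p)$'' must be read as the closed-form KL evaluated at the averaged mean $\mathbb{E}_{\XI}[\bm{\mu}]$ with the fixed $\bm{\sigma}$. The paper's proof silently adopts the same convention by writing $\varphi(\mathbb{E}[\mu_j])$; your making it explicit is an improvement, not a gap.
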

\begin{proof}
Let us define $\varphi(\mu_j):=KL(\mathcal{N}(\mu_j,\sigma_j)||p(z_j))$ where $p(z_j) = \mathcal{N}(0,1)$. From the fact that $q_{\PHI}$ is finite valued and twice differentiable function based on the neural networks, KL divergence with a fixed $p(z_j)$ is also twice differentiable and strongly convex with respect to $\mu_j$, i.e., $\varphi''(\mu_j)>0,\:\forall \mu_j$.
Let us denote that $\inf\varphi''(\mu_j)=m_j$ and $\sup\varphi''(\mu_j)=M_j$ where $0< m_j < M_j<\infty\:\forall j\in\{1,J\}$.
Because of Lemma \ref{lemma:gap_jensen}, the following equality holds:
\begin{equation}\label{eq:gap_jensen2}
    \frac{m_j}{2}\mathrm{Var}(\mu_j)\leq \mathbb{E}[\varphi(\mu_j)]-\varphi\left(\mathbb{E}[\mu_j]\right)\leq \frac{M_j}{2}\mathrm{Var}(\mu_j)
\end{equation}
Summing Eq. \ref{eq:gap_jensen2} upto $J$ and taking $\alpha$ such that $\min_j\{m_j\}<\frac{2\alpha}{J}<\max_j\{M_j\}$ finalize the proof.
\end{proof}
From Eq. \ref{eq:anomaly_score2}, model uncertainty metric, Variation$[\cdot]$, is proportional to $\sum_{j=1}^J\mathrm{Var}_{\XI}(\mu_j(\X^*|\XI))$ of Eq. \ref{eq:theorem}.
The expected value can be approximated by $T$ stochastic forward passes.
Because it only needs $T$ inferences and does not involve any further computations, incorporating model uncertainty into our anomaly score metric (Eq. \ref{eq:anomaly_score}) is so practical.

\paragraph{Relationship to Deep SVDD}
We would like to highlight that our PGN learning (Please see Eq. \ref{eq:gpn}) generalizes the popular anomaly detection method, Deep SVDD \cite{ruff2018deep}.
Deep SVDD is to minimize the hypersphere in the latent space.
If an instance lies out of the hypersphere, it is deemed anomalous.
Given an input dataset $\mathbf{X}$ and a prescribed center $\mathbf{c}\in\mathcal{Z}$, \textit{One-Class Deep SVDD} \cite{ruff2018deep} trains the neural network based model $f_{\mathbf{w}}$, parameterized by parameters $\mathbf{w}$, as

\begin{equation}\label{eq:one_class_deep_svdd}
    \min_{\mathbf{w}} \frac{1}{N}\sum_{i=1}^N\norm{f_{\mathbf{w}}(\Z|\X^{(i)})-\mathbf{c}}^2+\lambda \Omega(\mathbf{w})
\end{equation}
where the second term represents a weight decay regularizer with a hyperparameter $\lambda>0$.
After training, the anomaly score can be calculated as
\begin{equation}
    s(\X) = \norm{f_{\mathbf{w}^*}(\Z|\X) - \mathbf{c}}^2
\end{equation}
where $\mathbf{w}^*$ are the learned parameters.

From Eq.\ref{eq:gpn}, suppose that we set the prior $p(\Z)$ to an isotropic multivariate Gaussian, $\mathcal{N}(\mathbf{c}, \epsilon \mathbf{I})$, with $\epsilon\ll1 $ and that $q_{\PHI}$ only gives the estimated mean of the Gaussian with a fixed variance, i.e., $q_{\PHI}(\Z|\X^{(i)}) = \mathcal{N}(\Z|\bm{\mu}^{(i)}, \epsilon\mathbf{I})$.
Then, Eq,\ref{eq:gpn} can be rewritten as
\begin{equation}\label{eq:gpn_dsvdd}
\begin{aligned}
    \min_{\PHI} &\frac{1}{N}\sum_{i=1}^{N}\sum_{j=1}^{J}\left(\log\frac{\epsilon}{\epsilon} + \frac{\epsilon+(\mu_j^{(i)}-c_j)^2}{2\epsilon}-\frac{1}{2} \right)\\
    & = \frac{1}{N}\sum_{i=1}^{N}\sum_{j=1}^{J} \left(\frac{(\mu_j^{(i)}-c_j)^2}{2\epsilon}\right)
\end{aligned}
\end{equation}
Therefore, disregarding the weight decay term in Eq.\ref{eq:one_class_deep_svdd}, we claim that PGN is, in some sense, a general formulation of the Deep SVDD and it provides different perspective to Deep SVDD, which does not rely on the previous kernel based methods such as OC-SVM \cite{scholkopf2001estimating} or SVDD \cite{tax2004support}.

\section{Related Works}\label{sec:related_works}
\paragraph{Deep Anomaly Detection based on Neural Networks}
Outlier detection using replicator neural networks \cite{hawkins2002outlier} is, to the best of our knowledge, the first anomaly detection that uses neural networks where the reconstruction error is used as an anomaly score named `\textit{outlyingness score}'.
They introduced the replicator neural networks, feed-forwarding multi-layer perceptron neural networks with three hidden layers that forms the compressed latent representations and tries to reconstruct the inputs.
Many recent approaches that are based on the reconstruction error of autoencoders \cite{sakurada2014anomaly,an2015variational,dau2014anomaly} are also based on the same philosophical reasons.
To enhance the performance of detecting anomalies in a huge amount of complex and high-dimensional data, the different types of deep autoencoders such as VAEs \cite{kingma2013auto}, adversarial autoencoders (AAEs) \cite{makhzani2015adversarial}, denoising autoencoder \cite{vincent2010stacked}, and deep convolutional autoencoders \cite{masci2011stacked} have been equipped.

In a similar vein, generative adversarial networks (GANs) \cite{goodfellow2014generative} have been also used as architectures of anomaly detection while using the reconstruction error as an anomaly score. The examples of this approach include AnoGAN \cite{schlegl2017unsupervised} and OCGAN \cite{perera2019ocgan}.
Because GANs focus on the powerful data generation, these anomaly detection approaches suggest some methodologies to increase the reconstruction error for anomalies.
There are also deep neural networks based anomaly detection methods by \textit{One-class classification} such as Deep SVDD \cite{ruff2018deep} and OC-NN \cite{chalapathy2018anomaly}, which are inspired by kernel based methods, SVDD \cite{tax2004support} and OC-SVM \cite{scholkopf2001estimating}, respectively.  

Generative Probabilistic Novelty Detection (GPND) \cite{pidhorskyi2018generative} seeks to approximate the probability density of the data to distinguish anomalies.
To this end, they train the AAE-like architecture to learn the manifold structure of data distribution. 

\paragraph{Understanding VAE with Information Theory}
\textit{Information bottleneck} \cite{tishby2000information} can be utilized to understand representation learning such as variational autoencoders (VAEs).
VAEs is regarded as lossy compression in the context of rate-distortion theory.
This way of understanding gives a different theoretical point of view to understand VAEs. \cite{alemi2017fixing,brekelmans2019exact,blau2019rethinking}.
In \cite{lastras2019information}, the author argues that considering rate-distortion theory is the key to understanding representation learning and studies the possibility to optimize the marginal prior which is usually treated as fixed.
Alemi et al. \citet{alemi2017fixing} try to understand ELBO with the rate-distortion theory based framework which is similar to our work.
They show that decoupling ELBO to the rate and distortion helps understand the behavior of VAEs.

\section{Experiments}\label{sec:experiments}
\subsection{Distortion-Rate Functions of VAE}\label{subsec:dr_functions}

\begin{figure*}[ht]
\vskip 0.2in
\begin{subfigure}[c]{0.333\textwidth}
\centering
\includegraphics[height=1.3in]{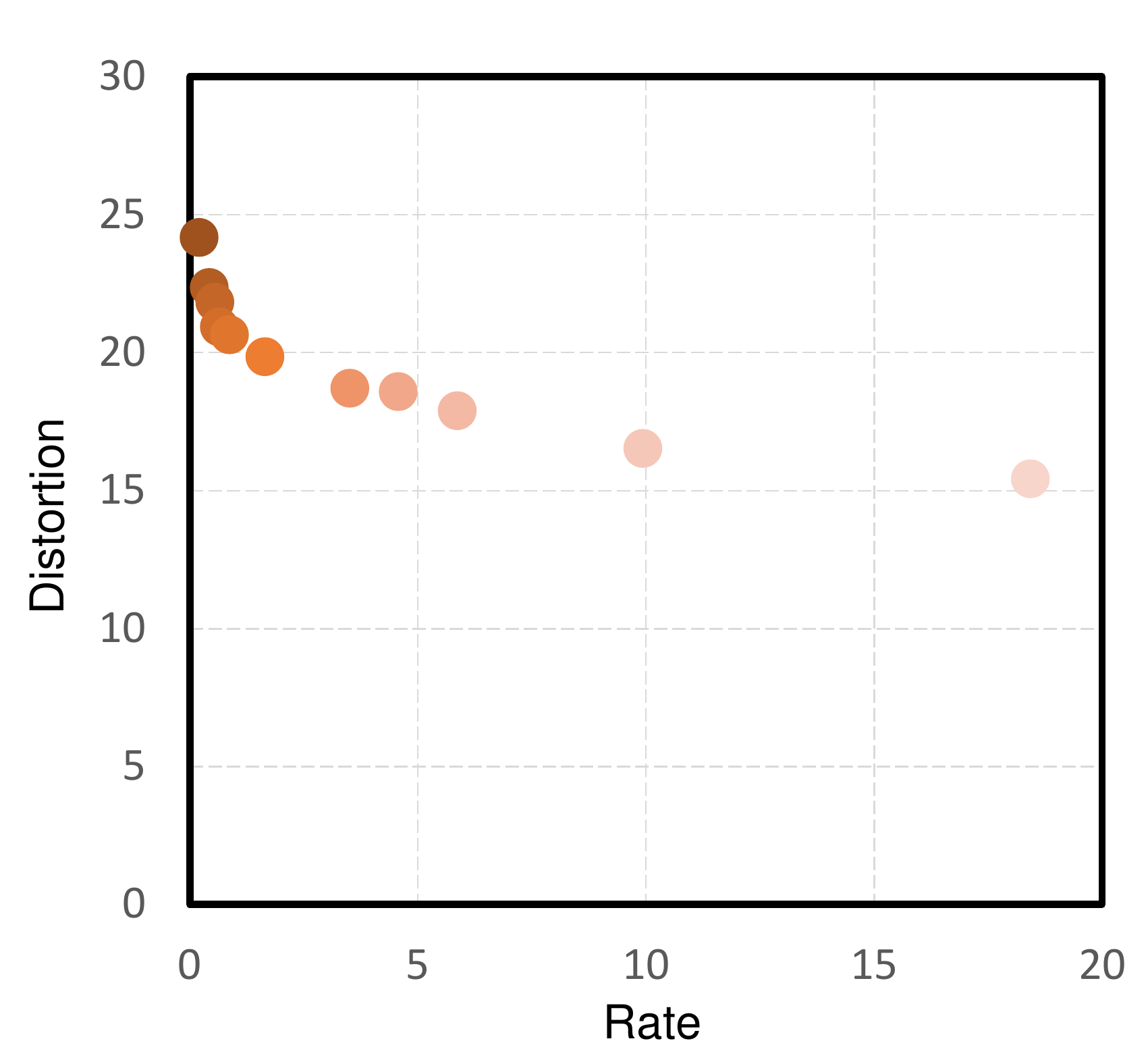}
\caption{MNIST}\label{subfig:mnist}
\end{subfigure}\hspace{-8mm}
\begin{subfigure}[c]{0.333\textwidth}
\centering
\includegraphics[height=1.3in]{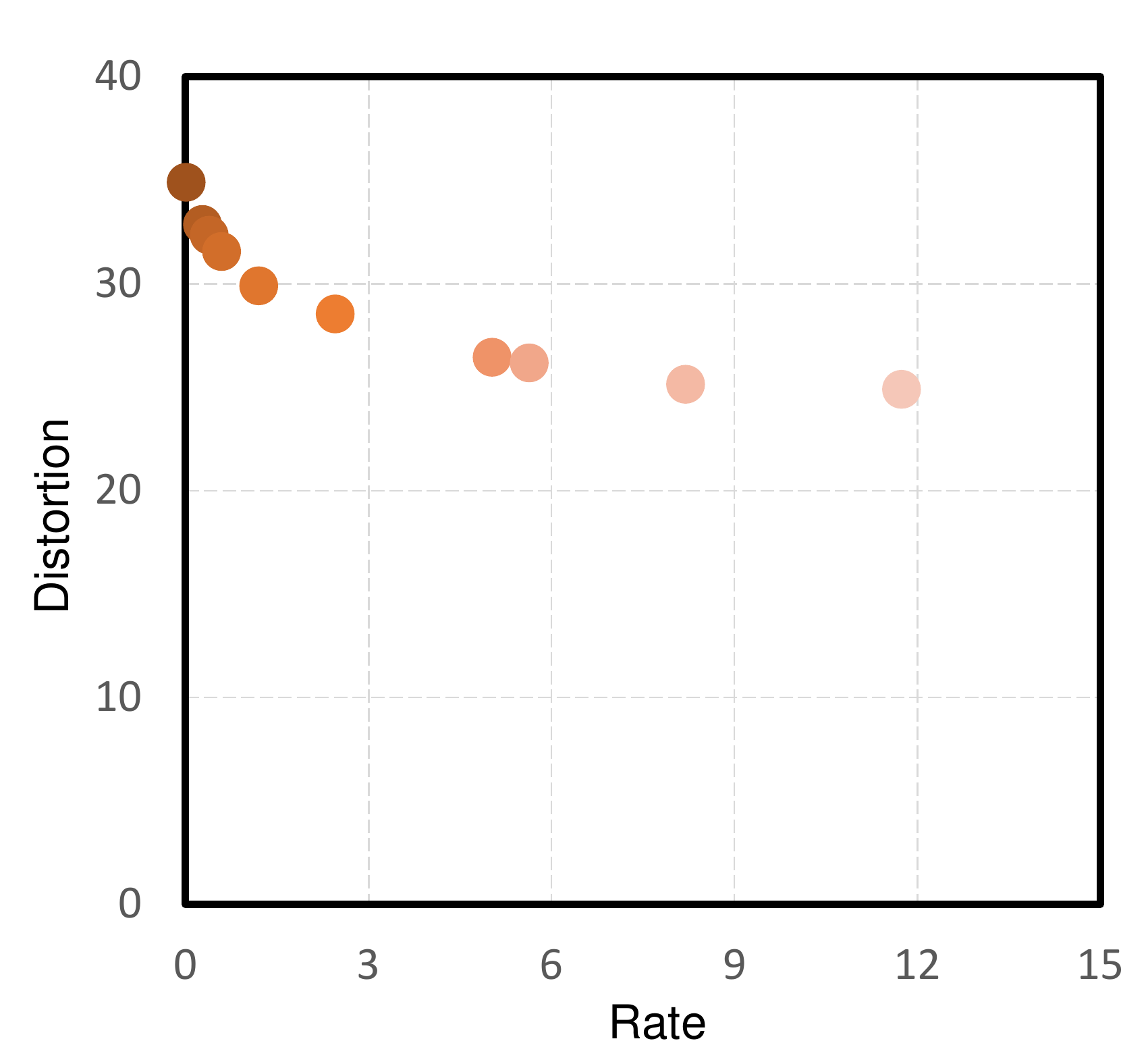}
\caption{FMNIST}\label{subfig:fmnist}
\end{subfigure}\hspace{-4mm}
\begin{subfigure}[c]{0.333\textwidth}
\centering
\includegraphics[height=1.3in]{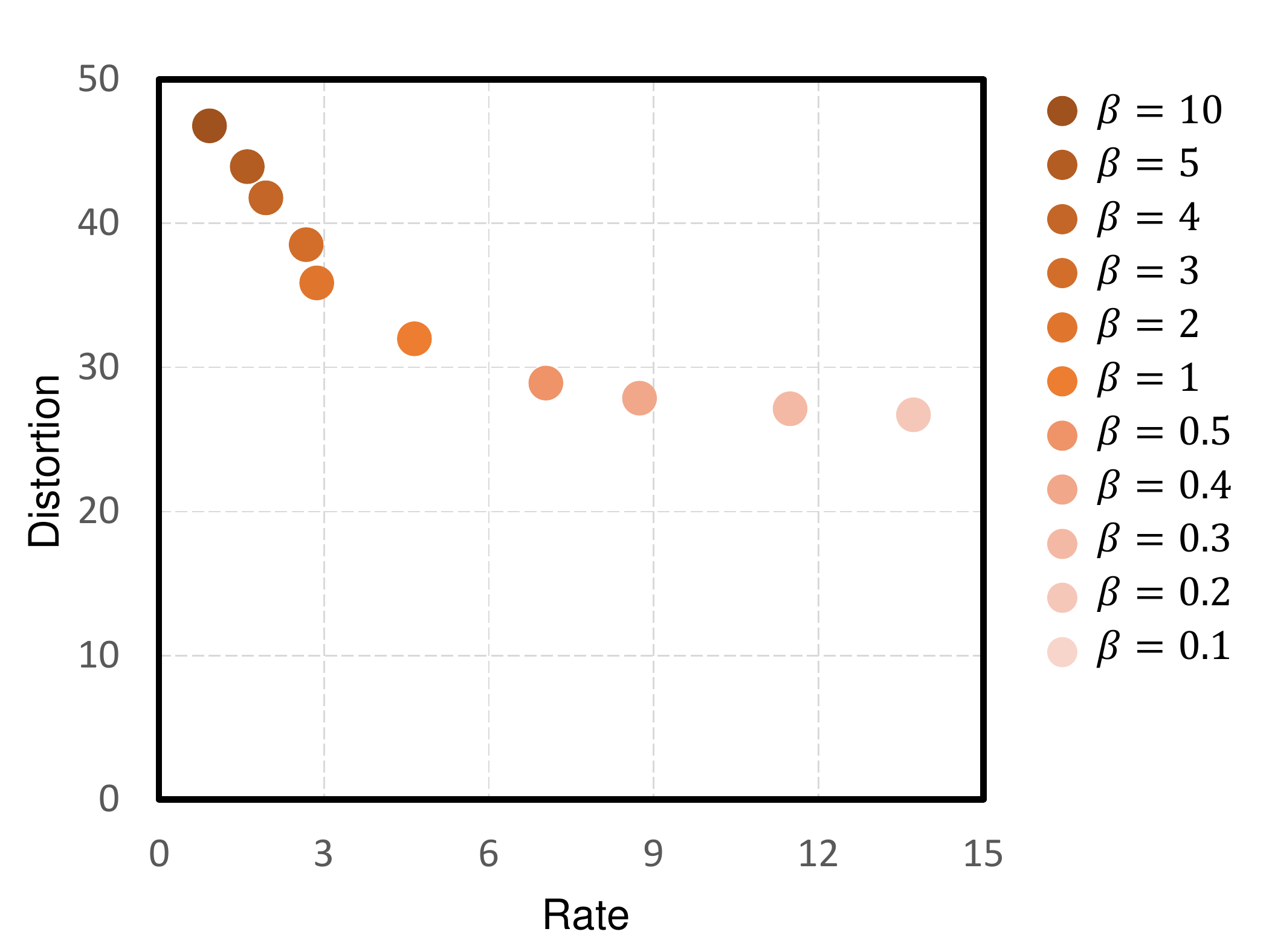}
\caption{CIFAR10}\label{subfig:cifar10}
\end{subfigure}
\vskip 0.2in
\caption{Distortion-rate functions on MNIST, FMNIST and CIFAR10 datasets. These show that VAE architectures which we use for anomaly detection experiments have sub-optimal distortion-rate functions empirically.
}\label{fig:fig3}
\vskip 0.2in
\end{figure*}

\paragraph{Experiment Settings} 
To verify our assumption that VAEs have the sub-optimal distortion-rate functions as depicted in Fig. \ref{fig:fig2}, we investigate the rate and distortion by varying $\beta$ of $\beta$-VAE (Eq. \ref{eq:distortion_rate_function}) on MNIST \cite{lecun1998mnist}, Fashion MNIST (FMNIST) \cite{xiao2017fashion} and CIFAR10 \cite{krizhevsky2014cifar} datasets.
We have used the official pre-split training dataset to train the model without any label information for all datasets. 
This work was similarly conducted by previous works \cite{alemi2017fixing,brekelmans2019exact}.
We use the LeNet-like convolutional and deconvolutional autoencoders.
On MNIST, the encoder contains two convolutional modules; $8\times(5,1,2)$ convolutional layers and $4\times(5,1,2)$ convolutional layers where a format is of (kernel size, stride, padding).
The convolutional modules are followed by batch normalization, Leaky ReLU with $\alpha=0.1$ and $2\times2$ max pooling. 
On CIFAR10, it contains three convolutional modules; $32\times(5,1,2)$ convolutional layers followed by $64\times(5,1,2)$ and $128\times(5,1,2)$ convolutional layers.
The dimension of the latent space is set to 32 for MNIST dataset, and 128 for CIFAR10 dataset.
The decoders for both datasets contain symmetrical transposed convolutional modules to each encoder.
We have used the reparameterization trick \cite{kingma2013auto} with $L=10$.
Adam optimizer \cite{kingma2014adam} is used for 1000 epochs with a learning rate of $5\mathrm{e}{-5}$, a weight decay of $1\mathrm{e}{-4}$ and batch size of 200.
We did not conduct any data augmentations or pre-processing without normalizing the images to $[0,1]$.
The runs were performed with different $\beta$ values from $\beta\in\{10,5,4,3,2,1,0.5,0.4,0.3,0.2,0.1\}$.

\paragraph{Results} Fig. \ref{fig:fig3} shows the distortion-rate functions on datasets.
Depending on $\beta$, we arrive at different $R$ and $D$ points.
As expected, when $\beta$ is increased, the resulting $R$ values get decreased, which is what we have expected in Section \ref{sec:VAE}.
It is noted that as $\beta$ decreased, $R+D$ gets increased meaning distortion-rate functions on these datasets are sub-optimal, which is due to the joint distribution $p(\X,\hat{\X})$.
By means of using the powerful encoder-decoder or imposing an appropriate prior, we could shrink the gap to the optimal rate-distortion tradeoff more, but this is out of our scope. 
One can also find similar results from \cite{alemi2017fixing}.

\subsection{Anomaly Detection Performances}
\paragraph{Baselines}
We compare our method, PGN, with other deep anomaly detection baselines for the anomaly detection task.
We have considered three baselines using the reconstruction error as their anomaly score including naive autoencoder (AE), variational autoencoder (VAE) \cite{kingma2013auto} and adversarial autoencoder (AAE) \cite{makhzani2015adversarial}. 
We also have considered Deep SVDD (DSVDD) \cite{ruff2018deep} and GPND \cite{pidhorskyi2018generative} as baselines. 

\paragraph{Datasets and Experiment Protocol}
We have used MNIST, FMNIST, and CIFAR10 as benchmark datasets.
We used official split training and test sets for all datasets.
We took data of one class in the pre-split training set as our training set (thus, our training set consists of all normal instances) while our test set is the same as the pre-split test set.
The test instances except for instances having the class label of the training set are deemed anomalies.
MNIST, FMNIST, and CIFAR10 have 10 classes each. 
So we conducted 10 independent anomaly detection experiments for each dataset.
The number of anomalies is about 9 times more than that of normal instances in test datasets.

\paragraph{Architectures and Experiment Settings}
The architectures of AE, AAE, VAE are the same as those in Section \ref{subsec:dr_functions}.
For VAE, we also have used the reparameterization trick \cite{kingma2013auto} with $L=10$.
AAEs train the model in adversarial learning and need the discriminator to discriminate between the generated latent variables of inputs and values sampled from the marginal prior.
This discriminator consists of three fully-connected layers with the number of weights of $J\times512$, $512\times256$, and $256\times1$, respectively, and each layer is followed by Leaky ReLU activations with $\alpha=0.1$.
Architectures of DSVDD and PGN are identical, the encoder of the autoencoder architecture. 
We also have considered \textit{hypersphere collapse} as reported in \cite{ruff2018deep} meaning that the model with bias terms can produce a trivial solution for DSVDD.
We found that hypersphere collapse can also occur in our PGN setting, so we did not use any bias terms in both PGN and DSVDD. 
Also, PGN outputs the mean and variance values of isotropic multivariate Gaussian.

For PGN, we have used $T=20$ for $T$ stochastic forward passes with the dropout probabiilty $p=0.5$.
It is noted that only mean outputs involve the MC dropout to consider the model uncertainty as described in Section \ref{sec:method} and Theorem \ref{thm:anomaly_score}.

It is reported that a pretraining with the autoencoder for DSVDD is helpful to enhance anomaly detection performance.
But we did not conduct any pretraining for all baselines and PGN for fair comparisons.
For AE, VAE, AAE, DSVDD and PGN, the Adam optimizer \cite{kingma2014adam} is used with a weight decay of $1\mathrm{e}{-4}$.
The learning rate is initialized to $1\mathrm{e}{-4}$ and reduced by a factor of 10 at 75th epochs for all datasets.
We train for 100 epochs and compare under the AUROC values on test datasets.
As stated in Section \ref{sec:problem_definition}, we treat the anomalies are positive and normal instances are negative, so that the anomaly scores of the methods are used for calculating AUROC values without any fixed thresholds.
For GPND, we have followed same experiment settings they recommended and provided\footnote{\url{https://github.com/podgorskiy/GPND}}.
For a data preprocessing, we only normalize the data to $[0,1]$ for both test and training datasets.

\begin{table*}[t]
\caption{Mean and std. dev. AUROCs [\%] with 10 different seeds on MNIST (\textit{Top}) and FMNIST (\textit{Bottom}) datasets.} \label{table:mnist_fmnist}
\vskip 0.15in
\small
\begin{center}
\begin{tabular}{@{}lcccccc@{}}
\toprule
Normal class & GPND  & DSVDD        & AE           & VAE          & AAE          & PGN (ours)          \\ \midrule
0    & 75.3$\pm$8.3 & 97.4$\pm$0.9 & \textbf{98.5$\pm$0.5} & 96.7$\pm$0.9 & 98.0$\pm$0.4 & 97.8$\pm$1.0 \\
1    & 96.2$\pm$2.5 & 99.6$\pm$0.2 & \textbf{99.9$\pm$0.0} & 99.8$\pm$0.0 & 99.8$\pm$0.0 & 99.6$\pm$0.1 \\
2    & 65.4$\pm$9.2 & 88.7$\pm$2.2 & 82.8$\pm$1.7 & 80.2$\pm$2.8 & 80.8$\pm$1.4 & \textbf{91.3$\pm$1.8} \\
3    & 68.9$\pm$6.9 & 89.4$\pm$1.3 & 90.3$\pm$1.5 & 88.9$\pm$0.6 & 89.7$\pm$0.9 & \textbf{91.1$\pm$1.3} \\
4    & 78.4$\pm$3.3 & 93.7$\pm$1.0 & 88.8$\pm$1.6 & 89.8$\pm$2.1 & 87.9$\pm$1.7 & \textbf{94.7$\pm$0.8} \\
5    & 69.3$\pm$5.6 & 87.1$\pm$2.6 & \textbf{92.3$\pm$1.3} & 89.9$\pm$2.3 & 90.3$\pm$2.0 & 89.8$\pm$2.0 \\
6    & 78.4$\pm$7.0 & 98.0$\pm$0.5 & 94.7$\pm$1.5 & 92.3$\pm$1.1 & 92.6$\pm$2.5 & \textbf{98.5$\pm$0.4} \\
7    & 83.8$\pm$4.8 & 94.1$\pm$1.0 & 94.6$\pm$0.9 & 92.8$\pm$0.9 & 93.9$\pm$0.6 & \textbf{94.9$\pm$0.8} \\
8    & 57.4$\pm$5.7 & 90.8$\pm$1.2 & 78.5$\pm$1.8 & 79.5$\pm$1.8 & 76.1$\pm$2.9 & \textbf{92.1$\pm$0.9} \\
9    & 77.1$\pm$3.9 & 95.9$\pm$0.6 & 91.8$\pm$1.5 & 90.6$\pm$2.2 & 90.1$\pm$2.2 & \textbf{96.7$\pm$0.3} \\
Avg.  & 75.0 & 93.5 & 91.2 & 90.1 & 89.9 & \textbf{94.7} \\ \midrule

T-shirt    & 77.2$\pm$7.9 & \textbf{90.4$\pm$1.1} & 88.1$\pm$0.5 & 87.5$\pm$0.6 & 87.9$\pm$0.6 & \textbf{90.4$\pm$3.4} \\
Trouser    & 95.8$\pm$1.4 & 98.5$\pm$0.2 & 97.8$\pm$0.2 & 96.9$\pm$0.3 & 98.1$\pm$0.1 & \textbf{98.6$\pm$0.1} \\
Pullover   & 78.1$\pm$5.6 & 85.8$\pm$3.1 & 83.7$\pm$0.6 & 83.8$\pm$0.8 & 80.7$\pm$1.6 & \textbf{86.1$\pm$5.2} \\
Dress      & 85.5$\pm$4.4 & 92.4$\pm$1.5 & 90.8$\pm$0.5 & 89.3$\pm$0.7 & 90.5$\pm$0.5 & \textbf{93.1$\pm$1.3} \\
Coat       & 77.8$\pm$4.3 & \textbf{89.2$\pm$1.4} & 86.7$\pm$0.5 & 85.1$\pm$0.8 & 86.6$\pm$0.6 & 87.7$\pm$6.5 \\
Sandal     & 89.4$\pm$0.9 & \textbf{89.4$\pm$0.6} & 83.3$\pm$1.2 & 82.3$\pm$1.4 & 83.5$\pm$1.2 & 89.2$\pm$0.5 \\
Shirt      & 76.3$\pm$3.5 & \textbf{80.6$\pm$1.7} & 78.7$\pm$0.3 & 79.0$\pm$0.6 & 77.5$\pm$0.4 & 80.3$\pm$2.3 \\
Sneaker    & 95.3$\pm$1.5 & 98.6$\pm$0.1 & 97.6$\pm$0.1 & 96.9$\pm$0.1 & 97.6$\pm$0.1 & \textbf{98.7$\pm$0.1} \\
Bag        & 68.1$\pm$4.2 & 91.1$\pm$1.9 & 75.2$\pm$1.4 & 75.3$\pm$2.1 & 75.5$\pm$2.3 & \textbf{92.2$\pm$5.5} \\
Ankle boot & 88.1$\pm$5.0 & \textbf{98.4$\pm$0.3} & 94.8$\pm$0.7 & 92.9$\pm$0.7 & 95.4$\pm$0.9 & 98.2$\pm$1.3 \\
Avg.       & 83.2 & 91.4 & 87.7 & 86.9 & 87.3 & \textbf{91.5} \\ 
\bottomrule
\end{tabular}
\end{center}
\vskip -0.1in
\end{table*}

\begin{table*}[ht!]
\vskip 0.5in
\caption{Mean and std. dev. AUROCs [\%] with 10 different seeds on CIFAR10 dataset.} \label{table:cifar10_result}
\begin{center}
\begin{tabular}{@{}lcccccc@{}}
\toprule
Normal class & GPND  & DSVDD        & AE           & VAE          & AAE          & PGN (ours)          \\ \midrule
Airplane   & 56.6$\pm$4.3 & 59.6$\pm$4.9  & 68.7$\pm$0.2 & 69.4$\pm$0.5 & 67.9$\pm$0.4 & \textbf{73.7$\pm$3.5} \\
Automobile & 55.2$\pm$3.8 & \textbf{56.4$\pm$2.4}  & 40.0$\pm$0.3 & 44.6$\pm$0.4 & 41.6$\pm$0.3 & 55.1$\pm$2.8 \\
Bird       & 55.1$\pm$2.4 & 64.9$\pm$1.8  & 65.0$\pm$0.2 & \textbf{65.1$\pm$0.2} & \textbf{65.1$\pm$0.2} & 64.3$\pm$3.0 \\
Cat        & 56.5$\pm$3.3 & 53.4$\pm$0.7  & 55.8$\pm$0.3 & 53.8$\pm$0.2 & 54.2$\pm$0.2 & \textbf{56.7$\pm$2.5} \\
Deer       & 69.2$\pm$1.8 & \textbf{72.4$\pm$3.1}  & 67.2$\pm$0.1 & 68.5$\pm$0.2 & 68.1$\pm$0.3 & 70.5$\pm$2.1 \\
Dog        & 53.9$\pm$2.0 & 53.5$\pm$2.4  & 56.2$\pm$0.2 & 53.9$\pm$0.2 & 54.4$\pm$0.2 & \textbf{60.3$\pm$3.8} \\
Frog       & 70.0$\pm$5.4 & \textbf{74.2$\pm$2.7}  & 55.4$\pm$0.3 & 61.1$\pm$0.5 & 58.7$\pm$0.4 & 72.1$\pm$3.5 \\
Horse      & \textbf{58.4$\pm$2.1} & 54.3$\pm$2.4  & 44.9$\pm$0.4 & 46.2$\pm$0.3 & 45.1$\pm$0.2 & 54.9$\pm$2.9 \\
Ship       & 64.8$\pm$4.0 & 67.6$\pm$2.2  & 74.5$\pm$0.3 & 74.5$\pm$0.4 & 73.4$\pm$0.3 & \textbf{75.5$\pm$2.8} \\
Truck      & 54.5$\pm$4.7 & \textbf{60.9$\pm$2.8}  & 41.8$\pm$0.2 & 45.2$\pm$0.5 & 42.7$\pm$0.3 & 60.8$\pm$2.2 \\
Avg. & 59.4 & 61.7 & 57.0 & 58.2 & 57.1 & \textbf{64.4}\\
\bottomrule
\end{tabular}
\end{center}
\vskip -0.1in
\end{table*}

\paragraph{Comparison Results}
Table \ref{table:mnist_fmnist} shows the results on MNIST and FMNIST datasets.
As shown, most of the cases, PGN gives the best results. 
AE gives competitive results among the reconstruction error based methods, AE, VAE, and AAE. 
AAE needs adversarial learning for training the generator (encoder) but it does not guarantee that it gives any meaningful differences with AE and/or VAE.
VAEs incorporate the rate into the loss function but anomaly detection conducts only with the distortion (reconstruction error) so VAE also does not give any advanced results.
Even though VAE gives the theoretical foundation on the autoencoder and can tweak the disentangled latent values to manipulate output easily, it does not have any advantages in anomaly detection tasks empirically. 
Table \ref{table:cifar10_result} gives the results on CIFAR10 dataset.
It shows that PGN gives the most competitive results on this dataset as well.
PGN and DSVDD empirically show very similar AUROC values along the classes because they share the same theoretical formulation, but we can say PGN is slightly but meaningfully better than DSVDD.
The main reason for this is that PGN detects anomalies with their involved model uncertainty metric, which can be captured by its anomaly score (Eq. \ref{eq:anomaly_score}).

\paragraph{Ablation Study }

\begin{figure*}[ht!]
\vskip 0.2in
\begin{subfigure}[c]{0.333\textwidth}
\centering
\includegraphics[width=0.9\columnwidth]{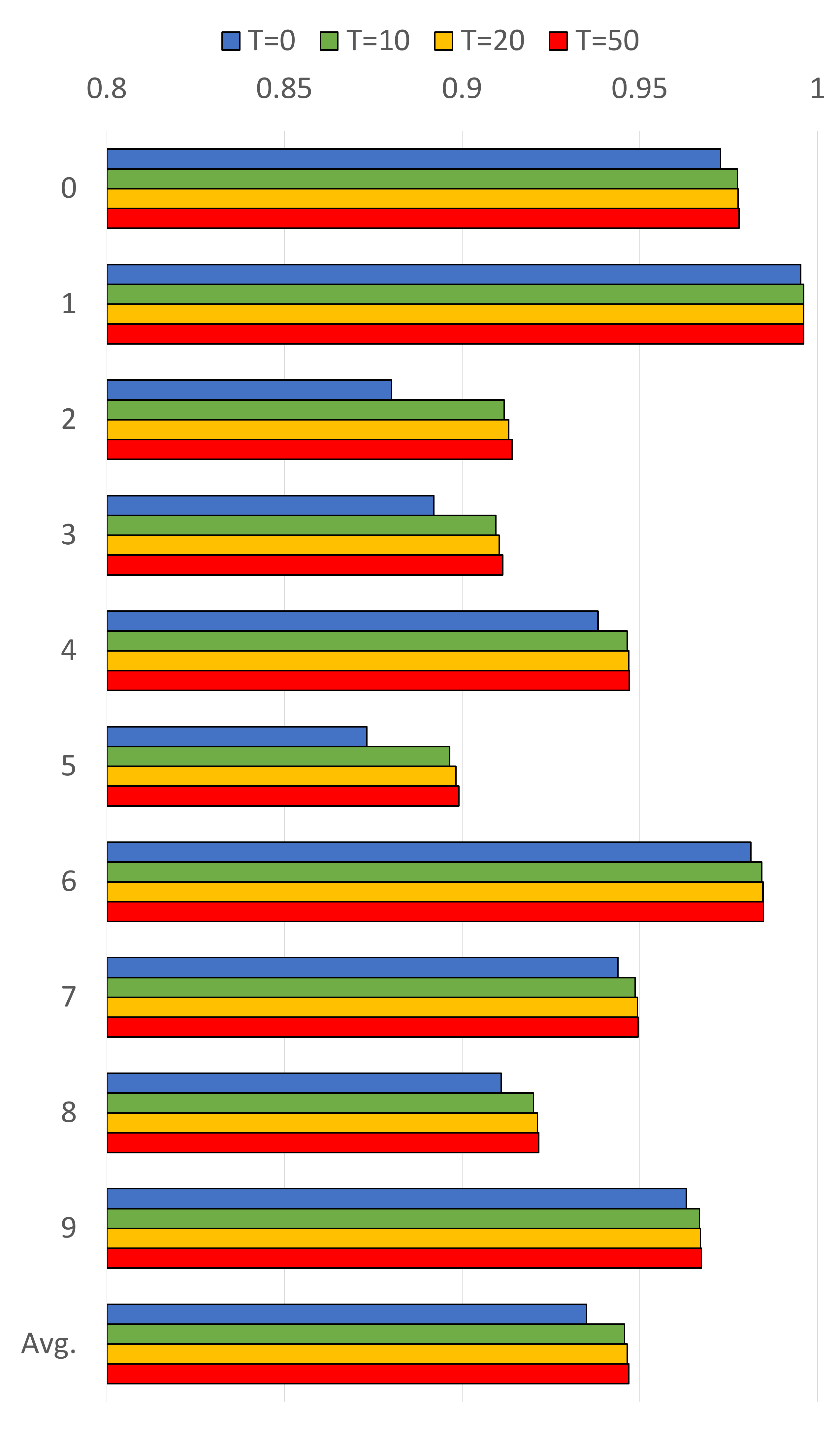}
\caption{MNIST}\label{subfig:ablation_mnist}
\end{subfigure}\hspace{-8mm}
\begin{subfigure}[c]{0.333\textwidth}
\centering
\includegraphics[width=0.9\columnwidth]{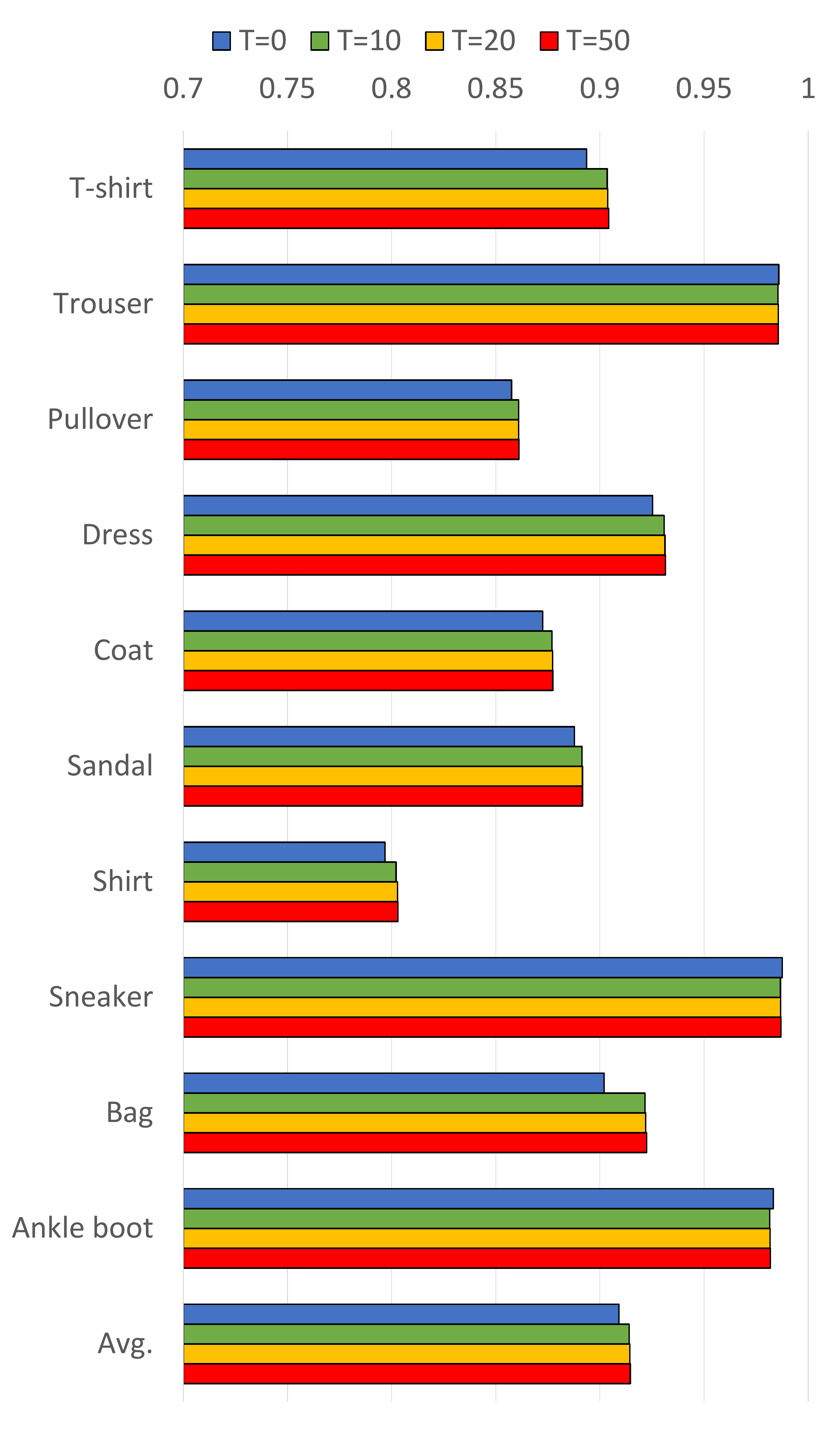}
\caption{FMNIST}\label{subfig:ablation_fmnist}
\end{subfigure}\hspace{-4mm}
\begin{subfigure}[c]{0.333\textwidth}
\centering
\includegraphics[width=0.9\columnwidth]{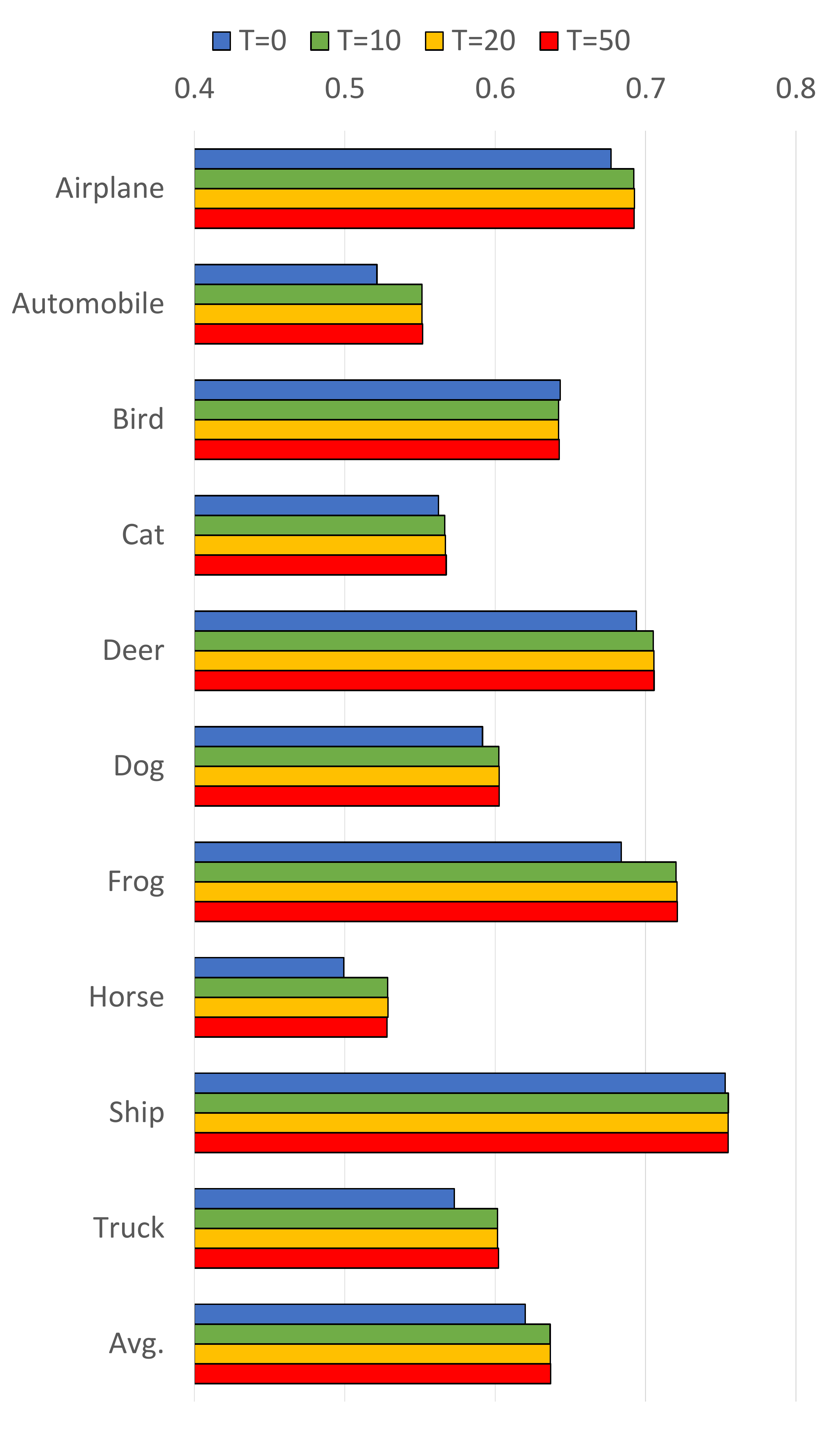}
\caption{CIFAR10}\label{subfig:ablation_cifar10}
\end{subfigure}
\vskip 0.2in
\caption{Ablation experiment results of model uncertainty metric of PGN on MNIST, FMNIST and CIFAR10 datasets. The values represent AUROCs [\%] with 10 different seeds.
}\label{fig:ablation}
\vskip 0.2in
\end{figure*}

To scrutinize the effectiveness of involving the model uncertainty we proposed, we performed an ablation study on three datasets.
The results are shown in Fig. \ref{fig:ablation}.
In Fig. \ref{fig:ablation}, $T=0$ corresponds to the standard dropout meaning that we turn off stochastic forward passes when testing thus it does not consider the model uncertainty involved in the input instance.
It shows that when $T\geq10$ the mean and standard deviation values are very similar so we can say that using $T=10$ is enough to leverage model uncertainty via MC dropout for these datasets.
As a result, when we use MC dropout, the AUROC values on three datasets get increased with significant margins.

\section{Conclusion}\label{sec:conclusion}
We would like to highlight that this work could connect the deep anomaly detection method with the theoretical foundations on variational inference and information theory.
We propose PGN that can capture anomalies by means of estimating data distribution and shows better result by incorporating MC dropout based model uncertainty metric.
We expect that this model could be better when equipped with more powerful architectures.

\bibliography{ref}
\bibliographystyle{unsrtnat}

\end{document}